\icmltitlerunning{Improving Neural Language Modeling via Adversarial Training}
\newcommand{\softmax}{{\mathrm{Softmax}}}
\newcommand{\ours}{{\mathrm{AdvSoft}}}
\newcommand{\advsoft}{\ours}
\begin{document}

\twocolumn[
\icmltitle{Improving Neural Language Modeling via Adversarial Training}




\icmlsetsymbol{equal}{*}

\begin{icmlauthorlist}
\icmlauthor{Dilin Wang}{equal,ut}
\icmlauthor{Chengyue Gong}{equal,ut}
\icmlauthor{Qiang Liu}{ut}
\end{icmlauthorlist}
\icmlaffiliation{ut}{Department of Computer Science, UT Austin}
\icmlcorrespondingauthor{Dilin Wang}{dilin@cs.utexas.edu}
\icmlcorrespondingauthor{Chengyue Gong}{cygong@cs.utexas.edu}
\vskip 0.2in
]

\printAffiliationsAndNotice{\icmlEqualContribution}

\begin{abstract}
Recently, substantial progress has been made in language modeling by using deep neural networks. 
However, in practice, 
large scale neural language models have been shown to be prone to overfitting.
In this paper, 
we present a simple yet highly effective   
adversarial training mechanism for regularizing neural language models.
The idea is to introduce adversarial noise to the output embedding layer while training the models. We show that the optimal adversarial noise yields a simple closed form solution, thus allowing us to develop a simple and time efficient algorithm. 
Theoretically, we show that our adversarial mechanism effectively encourages the diversity of the embedding vectors, helping to increase the robustness of models. 
Empirically, we show that our method improves on the single model state-of-the-art results for language modeling on Penn Treebank (PTB) and Wikitext-2,  
achieving test perplexity scores of 46.01 and 38.07, respectively. 
When applied to machine translation, 
our method improves over 
various transformer-based translation baselines 
in BLEU scores on the WMT14 English-German and IWSLT14 German-English tasks.
\end{abstract}
\vspace{-2em}


\section{Introduction}

Statistical language modeling is a 
fundamental task in machine learning, with wide applications in various areas, 
including automatic speech  recognition  \citep[e.g.,][]{yu2016automatic}, machine translation \citep[e.g.,][]{koehn2009statistical} and computer vision \citep[e.g.,][]{xu2015show},
to name a few. 
Recently, deep neural network models, especially
recurrent neural networks (RNN) based models,  
have emerged to be one of the most powerful 
approaches for language modeling  
\citep[e.g.,][]{merity2017regularizing, yang2017breaking, vaswani2017attention, anderson2018bottom}. 

Unfortunately, a major challenge in  training large scale RNN-based language models is  their tendency to overfit; 
this is caused by the 
high complexity of RNN models 
and the discrete nature of language inputs.  
Although various regularization techniques, such as early stop and  dropout \citep[e.g.,][]{gal2016theoretically}, 
have been investigated, severe overfitting is still widely observed in state-of-the-art benchmarks, as evidenced by the large gap between training and testing performance. 


In this paper, we develop
a simple yet surprisingly efficient minimax training strategy for regularization.
Our idea is to inject an adversarial perturbation  
on the word embedding vectors in the softmax layer of the language models,     
and seek to find the optimal parameters that  maximize the worst-case performance subject to 
the adversarial perturbation.   
%
%
Importantly, we show that the optimal perturbation vectors yield a simple and computationally efficient form under our construction, allowing us to derive a simple 
and fast training algorithm (see Algorithm~\ref{alg:main}),  
which can be easily implemented based a minor modification of the standard maximum likelihood  training and does not introduce additional training parameters.  

An intriguing theoretical property of our method is that it provides an 
effective mechanism to encourage
diversity of word embedding vectors,  
which is widely observed to yield 
better generalization performance  
in neural language models \citep[e.g.,][]{
mu2017all, gao2018representation, 
liu2018learning, cogswell2015reducing, khodak2018carte}. 
%
In previous works, the diversity is often enforced explicitly by adding additional diversity penalty terms \citep[e.g.,][]{gao2018representation}, 
which may impact the likelihood optimization and are computationally expensive when the vocabulary size is large. 
Interestingly, we show that our adversarial training  
effectively enforces diversity without explicitly introducing the additional diversity penalty,  and is significantly more  computationally efficient than direct regularizations. 

Empirically, 
we find that our adversarial method 
can significantly improve the performance of 
state-of-the-art large-scale neural language modeling  and machine translation. 
For language modeling, 
we establish a new single model state-of-the-art result
for the Penn Treebank (PTB) and WikiText-2 (WT2) datasets to the best of our knowledge, achieving 46.01 and 38.07 test perplexity scores, respectively.  
On the large scale WikiText-103 (WT3) dataset, our method improves the Quasi-recurrent neural networks (QRNNs) \citep{merity2018analysis} 
baseline. 

To demonstrate the broad applicability of the method, 
we also apply our method to improve machine translation, using 
Transformer \citep{vaswani2017attention} as our base model. 
By incorporating our adversarial training,  
we improve a variety of Transformer-based translation baselines on the WMT2014 English-German and IWSTL2014 German-English translations.




\section{Background: Neural Language Modeling}
Typical word-level language models are specified as a product of conditional probabilities using the chain rule: 
\begin{align}\label{cond}
p(x_{1:T}) = \prod_{t=1}^T p(x_t~|~x_{1:t-1}),
\end{align}
where $x_{1:T} = [x_1, \cdots, x_T]$ denotes a sentence of length $T$, with 
$x_t \in \vb$ the $t$-th word and $\vb$ the 
vocabulary set. 
In modern deep language models, the conditional probabilities $p(x_t|x_{1:t-1})$  are often specified using recurrent neural networks (RNNs), 
in which  the context $x_{1:t-1}$ 
at each time $t$ is represented 
using a hidden state vector $h_t \in \RR^{d_h}$ defined recursively via 
\begin{align}
     h_t = f(x_{t-1}, h_{t-1}; \vv\theta),  \label{equ:fht}
\end{align}
where $f$ is a nonlinear map with a trainable parameter $\vv \theta$. 
The conditional probabilities are then defined using a softmax function: 
\begin{align} 
\begin{split}
 p(x_t ~|~x_{1:t-1}; \vv\theta, \vv w) 
& = \softmax(x_{t}, \vv w, h_t) \\
& :=  \frac{\exp(w_{x_t}^\top h_t)}{\sum_{\ell =1}^{|\mathcal V|} \exp(w_{\ell}^\top h_t)},
\end{split}
 \label{equ:softmax}
\end{align}
where $\vv w = \{w_i\}\subset \RR^{d}$ is the coefficient of softmax;  
$w_i$ can be viewed as an embedding vector for word $i \in \mathcal V$
and $h_t$ the embedding vector of context $x_{1:t-1}$.  
The inner product $w_{x_t}^\top h_t$ measures the similarity between word $x_t$ and context $x_{1:t-1}$, which is converted into a probability using the softmax function. 

In practice, the nonlinear map $f$ is specified by typical RNN units, 
such as LSTM \citep{hochreiter1997long} or GRU \citep{chung2014empirical}, 
applied on another set of embedding vectors $w_i'\in \RR^{d'}$ of the words, 
that is, 
$$
f(x_{t-1}, h_{t-1}; ~\vv\theta) = f_{RNN}(w'_{x_{t-1}}, h_{t-1}; ~\vv\theta'), 
$$
where $\vv\theta'$ is the weight of the RNN unit $f_{RNN}$, 
and  $\vv\theta = [\vv w', \vv\theta']$ is trained jointly with $\vv w$. 
Here, $w'_i$ is the embedding vector of word , fed into the model from the input side (and hence called the \emph{input embedding}), while $w_i$ is the embedding vector from the output side (called the \emph{output embedding}). 
It has been found that it is often useful to tie the input and output embeddings, that is, setting $w_i = w_i'$ (known as the weight-tying trick), 
which reduces the total number of free parameters and yields significant improvement of performance   \citep[e.g.,][]{press2016using, inan2016tying}. 


Given a set of sentences $
\{x_{1:T}^\ell\}_{\ell}$, 
the parameters $\vv\theta$ and $\vv w$ are jointly trained by maximizing the log-likelihood:  
\begin{align} \label{equ:mle}
\max_{\vv\theta, \vv w} \left \{ \mathcal{L}(\vv \theta, \vv w) := \sum_{t,\ell} \log p(x_t^\ell\mid x_{1:t-1}^\ell; \vv \theta, \vv w) \right\}. 
\end{align}
This optimization involves joint training of 
a large number of parameters $[\vv\theta,\vv w]$, including both the neural weights and word embedding vectors,  and is hence highly prone to overfitting in practice. 

\section{Main Method} 
\label{sec:main}
We propose a simple 
algorithm that 
effectively  
alleviates overfitting in deep neural language models, 
based on injecting adversarial perturbation on the output embedding vectors $w_i$ in  the softmax function (Eqn. \eqref{equ:softmax}).  
Our method is \emph{embarrassingly simple}, adding virtually no additional computational overhead over standard maximum likelihood training, 
while achieving substantial improvement on challenging benchmarks (see Section~\ref{sec:experiement}). 
We also draw theoretical insights on this simple mechanism,  
showing that it implicitly 
promotes diversity among the output embedding vectors $\{w_i\}$,
which is widely believed to increase robustness of the results \citep[e.g.,][]{cortes1995support, liu2018learning, gao2018representation}. 


\subsection{Adversarial MLE} 

Our idea is to introduce an adversarial noise 
on the output embedding vectors $\vv w = \{w_i\}$ in maximum likelihood training \eqref{equ:mle}: 
\begin{align}\label{equ:advsoft0}
\begin{split}
& \max_{\vv \theta,\vv w} \min_{\{\delta_{j;t,\ell}\}} \sum_{t,\ell} \log p(x_{t}^\ell ~|~ x_{1:t-1}^\ell; ~\vv\theta, ~\{ w_j+\delta_{j; t,\ell}\}), \\
& s.t. ~~~~ ||\delta_{j; t,\ell} ||\leq \epsilon/2, ~~~\forall j,t,\ell,
\end{split}
\end{align}
where $\delta_{j; t,\ell}$ is an adversarial perturbation applied on 
the embedding vector $w_j$ of word $j\in \mathcal V$, 
in the $\ell$-th sentence at the $t$-th location.  
We use $||\cdot||$ to denote the L2 norm throughout this paper; 
$\epsilon$ controls the magnitude of the adversarial perturbation.   


A key property of this formulation is that, with fixed model parameters $[\vv\theta,\vv w]$, 
the adversarial perturbation $\vv\delta = \{\delta_{i;t,\ell}\}$ has an elementary closed form solution, 
which allows us to derive a simple and efficient algorithm (Algorithm~\ref{alg:main}) by optimizing $[\vv\theta,\vv w]$ and $\vv\delta$ alternately.

\begin{thm}\label{thm:advh}
For each conditional probability term
$p(x_t=i ~|~x_{1:t-1}; \vv\theta, \vv w) 
= \softmax(i, \vv w, h_t)$  in \eqref{equ:softmax},
the optimization of the adversarial perturbation in \eqref{equ:advsoft0} is formulated as  
$$
\min_{\{\delta_j\}_{j\in \mathcal V}}  \frac{\displaystyle\exp((w_i + \delta_i)^\top h)}{\displaystyle\sum_j \exp((w_j + \delta_j)^\top h)}~~~~s.t~~~~ ||\delta_j||\leq \epsilon/2,~~~\forall j\in\mathcal V. 
$$
This is equivalent to just adding adversarial perturbation on $w_i$  
with magnitude $\epsilon$:
$$
\min_{\delta_i}  \frac{\displaystyle\exp((w_i + \delta_i)^\top h)}{\displaystyle
\exp((w_i + \delta_i)^\top h) + \sum_{j\neq i} \exp(w_j^\top h)}~~~~s.t~~~~ ||\delta_i||\leq \epsilon, 
$$
which is further equivalent to 
\begin{align}\label{equ:deltaistar}
\delta_i^* = \argmin_{||\delta_i||
\leq \epsilon }  (w_i + \delta_i)^\top h = - \epsilon h/||h||. 
\end{align}
As a result, we have 
\begin{align}\label{equ:advsoft} 
\begin{split}
& {\advsoft}_\epsilon(i, \vv w,h) \\
&~~~~~~~~~~~~~~  :=  \min_{||\delta_i||_2\leq\epsilon} \softmax(i, \{w_i+\delta_i, w_{\neg i}\}, ~ h) \\
&~~~~~~~~~~~~~~   = \frac{\displaystyle\exp(w_i^\top h-\epsilon||h||)}{\displaystyle
\exp(w_i^\top h-\epsilon||h||) + \sum_{j\neq i} \exp(w_j^\top h)},
\end{split}
\end{align}
where $w_{\neg i} = 
\{w_j \colon  j\neq i\}$. 
\end{thm}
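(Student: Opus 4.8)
The plan is to reduce the constrained minimization of the softmax probability to a collection of \emph{decoupled} linear optimization problems over Euclidean balls, each of which is solved in closed form by Cauchy--Schwarz. First I would rewrite the objective: dividing the numerator and denominator of $\softmax(i,\{w_j+\delta_j\},h)$ by $\exp((w_i+\delta_i)^\top h)$ gives
$$
\softmax(i,\{w_j+\delta_j\},h) = \frac{1}{1 + \sum_{j\neq i}\exp\!\big((w_j - w_i)^\top h + (\delta_j-\delta_i)^\top h\big)},
$$
so minimizing this probability over $\vv\delta$ subject to $||\delta_j||\le\epsilon/2$ is equivalent to \emph{maximizing} $S(\vv\delta):=\sum_{j\neq i}\exp\big((w_j-w_i)^\top h + (\delta_j-\delta_i)^\top h\big)$ over the same constraint set.

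Second I would exploit the separable structure of $S$. For each $j\neq i$, the vector $\delta_j$ enters only the $j$-th summand, and only through the scalar $\delta_j^\top h$; since $\exp$ is increasing, that summand is maximized by maximizing $\delta_j^\top h$ over $||\delta_j||\le\epsilon/2$, which by Cauchy--Schwarz is attained at $\delta_j^* = \frac{\epsilon}{2}\,h/||h||$ (assuming $h\neq 0$; if $h=0$ the probability does not depend on $\vv\delta$ and the claim is vacuous). The vector $\delta_i$ enters every summand, but always through the same linear functional $-\delta_i^\top h$ with a common sign, so all summands are maximized simultaneously by minimizing $\delta_i^\top h$ over $||\delta_i||\le\epsilon/2$, attained at $\delta_i^* = -\frac{\epsilon}{2}\,h/||h||$. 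Hence the individual maximizers are jointly optimal, and at the optimum each exponent shifts by $(\delta_j^*-\delta_i^*)^\top h = \epsilon ||h||$.

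Third I would note that the very same shift of $\epsilon||h||$ in every logit difference is obtained by perturbing \emph{only} $w_i$ with the doubled budget $\epsilon$: restricting to $\delta_j=0$ for $j\neq i$ and $||\delta_i||\le\epsilon$ turns $S$ into $\big(\sum_{j\neq i}\exp((w_j-w_i)^\top h)\big)\exp(-\delta_i^\top h)$, which is maximized again by minimizing $\delta_i^\top h$, i.e.\ $\delta_i^* = -\epsilon\,h/||h||$ by Cauchy--Schwarz --- exactly \eqref{equ:deltaistar} --- and this yields the same maximal value of $S$. This establishes the equivalence of the three formulations. Substituting $\delta_i^* = -\epsilon h/||h||$ into the softmax and using $(w_i+\delta_i^*)^\top h = w_i^\top h - \epsilon||h||$ then gives the closed form \eqref{equ:advsoft}.

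The only step that needs real care --- and what I would flag as the main obstacle --- is the decoupling argument: one must verify that optimizing the \emph{shared} variable $\delta_i$ does not trade off against the per-word variables $\{\delta_j\}_{j\neq i}$. This works precisely because $\delta_i$ appears in each summand additively through the single functional $-\delta_i^\top h$ with the same sign, so the constrained maximizer of the sum factorizes into the individual maximizers; the degenerate case $h=0$ should be handled as a separate remark.
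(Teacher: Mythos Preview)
Your argument is correct and complete. The paper does not supply a separate proof of this theorem; it simply states the chain of reductions (full perturbation $\to$ single perturbation with doubled budget $\to$ linear minimization over the ball $\to$ closed form) as the theorem itself, and your proposal fills in exactly the natural justifications for those steps---the reciprocal rewrite, the separability of $S(\vv\delta)$ in the $\delta_j$'s, the common-sign argument for $\delta_i$, and the Cauchy--Schwarz extremizer---so the approach matches what the paper implicitly relies on.
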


In practice, we propose to optimize $[\vv\theta, \vv w]$ and $\vv \delta = \{\delta_{i;t,\ell}\}$ alternatively.  %
Fixing $\vv\delta$, the models parameters $[\vv\theta,\vv w]$  
are updated using  gradient descent as standard maximum likelihood training. 
Fixing $[\vv\theta, \vv w]$, 
the adversarial noise $\vv \delta$ is updated using the elementary solution in 
\eqref{equ:deltaistar}, 
which introduces almost no additional computational cost. 
See Algorithm~\ref{alg:main}. 
%
%
Our algorithm can be viewed as an approximate gradient descent optimization of    
$\mathrm{AdvSoft}_\epsilon(i,\vv w,h)$, but without back-propagating through the norm term $\epsilon ||h||$. Empirically, we note that back-propagating through $\epsilon ||h||$ 
seems to make the performance worse, as
the training error would diverge within a few epochs.
This is maybe because the gradient of $\epsilon ||h||$ forces $||h||$ to be large in order to increase $\mathrm{AdvSoft}_\epsilon(i, \vv w,h)$, which is not encouraged in our setting.

\subsection{Diversity of Embedding Vectors} 

An interesting property of our adversarial strategy is that  
it can be viewed as a mechanism to encourage diversity among word embedding vectors: 
we show that an embedding vector $w_i$ is guaranteed 
to be separated from the embedding vectors of all the other words by at least distance $\epsilon$,
once there exists a context vector $h$ 
with which $w_i$ dominates the other words according to $\advsoft$.  
This is a simple property implied by the definition of the adversarial setting:  
if there exists an $w_j$ within the $\epsilon$-ball of $w_i$, 
then $w_i$ (and $w_j$) can never dominate the other, 
because the winner is always penalized by the adversarial perturbation.   


\begin{mydef}\label{def:word}
Given a set of embedding vectors $\vv w = \{w_i\}_{i\in \mathcal V}$, 
a word $i\in \mathcal V$ is said to be \emph{$\epsilon$-recognizable} 
if there exists a vector $h \in \RR^d$  
on which $w_i$ dominates 
all the other words under  $\epsilon$-adversarial perturbation, in that 
\begin{align*} 
\min_{||\delta_i||\leq \epsilon} (w_i + \delta_i)^\top h
& = (w_i^\top h - \epsilon ||h||) \\
& > w_j^\top h, ~~~~  \forall  j\in \mathcal V, j\neq i.
\end{align*}
In this case, 
we have $\advsoft_\epsilon(i, \vv w, h)\geq 1/|\vb|$, and 
$w_i$ would be classified to be the target word of context $h$, despite the adversarial perturbation. 
\end{mydef}

\begin{algorithm}[t] 
\begin{algorithmic} 
    \STATE \textbf{Input} Training data $\mathcal D=\{x_{1:T}^\ell\}$, model parameters $\vv \theta, \vv w$
    \WHILE {not converge} 
    \STATE Sample a mini-batch $\mathcal{M}$ from the data $\mathcal D.$ 
    \STATE For each sentence $x_{1:T}^\ell$ in the minibatch and $t \leq T$, 
    set the adversarial noise on $p(x_{t}^\ell|x_{1:t-1}^\ell)$ to be  
    $$
        \delta_{j;t,\ell}  = 
    \begin{cases}
     - \epsilon h_t^\ell/||h_t^\ell||,  &\text{for~~} {j = x_{t}^\ell} \\ 
     0, & \text{for~~}{j \neq x_{t}^\ell}, 
    \end{cases}
    $$
    where $h_t^\ell$ is the RNN hidden state related to $x_{1:t-1}^\ell$, define in \eqref{equ:fht}.  
    \STATE Update $\{\vv\theta, \vv w\}$ using gradient ascent of log-likelihood~\eqref{equ:mle} on minibatch $\mathcal M$, 
    \ENDWHILE 
\STATE 
\emph{Remark}. We find it is practically useful to choose $\alpha$ to adapt with the norm of $w_i$, that is, 
$\epsilon = \alpha ||w_i||$ for each word, and $\alpha$ is a hyperparameter.     
\end{algorithmic}
\caption{Adversarial MLE Training} 
\label{alg:main} 
\end{algorithm}

\begin{thm}
\label{thm:diversity}
Given a set of embedding vectors $\vv w = \{w_i\}_{i\in \mathcal V}$, 
 if a word $w_i$ is $\epsilon$-recognizable, then we must have 
$$
\min_{j\neq i}||w_j- w_i||  >  \epsilon, 
$$
that is, 
$w_i$ is separated from the embedding vectors of all other words by at least $\epsilon$ distance. 
\end{thm}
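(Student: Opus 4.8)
The plan is to give a direct proof using only the Cauchy--Schwarz inequality; no machinery beyond the definition of $\epsilon$-recognizability is needed. By assumption $w_i$ is $\epsilon$-recognizable, so there exists $h\in\RR^d$ with
$w_i^\top h - \epsilon\|h\| > w_j^\top h$ for every $j\in\mathcal V$ with $j\neq i$ (this is exactly the inequality in Definition~\ref{def:word}, where the left-hand side comes from the closed form \eqref{equ:deltaistar}). The first thing I would record is that $h$ cannot be the zero vector: if $h=0$ both sides of the inequality vanish, contradicting strictness. Hence $\|h\|>0$, which we will need in order to divide by $\|h\|$ later.

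Next, fix an arbitrary $j\neq i$ and rearrange the domination inequality into $(w_i-w_j)^\top h > \epsilon\|h\|$. Applying Cauchy--Schwarz to the left-hand side gives $(w_i-w_j)^\top h \le \|w_i-w_j\|\,\|h\|$, so that $\epsilon\|h\| < \|w_i-w_j\|\,\|h\|$. Dividing through by $\|h\|>0$ yields $\|w_i-w_j\| > \epsilon$. Since $j\neq i$ was arbitrary and the vocabulary $\mathcal V$ is finite, the minimum $\min_{j\neq i}\|w_j-w_i\|$ is attained and is therefore strictly greater than $\epsilon$, which is the claim.

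I do not expect any genuine obstacle here: the argument is essentially a one-line invocation of Cauchy--Schwarz. The only points that deserve a moment of care are (i) ruling out $h=0$ before dividing by $\|h\|$, which the strict domination inequality does for free, and (ii) using finiteness of $\mathcal V$ to pass from ``$\|w_i-w_j\|>\epsilon$ for every competitor $j$'' to ``$\min_{j\neq i}\|w_j-w_i\|>\epsilon$'' rather than merely an infimum bound $\inf_{j\neq i}\|w_j-w_i\|\ge\epsilon$. Conceptually, the statement just formalizes the intuition motivating the adversarial construction: if some context $h$ still lets $w_i$ beat every other word after $w_i$ has been dragged a distance $\epsilon$ in the worst direction, then no other embedding $w_j$ could have been sitting within distance $\epsilon$ of $w_i$ to begin with.
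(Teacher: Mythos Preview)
Your proof is correct and essentially matches the paper's one-line argument. The only cosmetic difference is packaging: the paper argues by contradiction, observing that if $\|w_j-w_i\|\le\epsilon$ then $\delta_i:=w_j-w_i$ is an admissible perturbation, which forces $\min_{\|\delta_i\|\le\epsilon}(w_i+\delta_i)^\top h\le w_j^\top h$ and contradicts $\epsilon$-recognizability; you instead work directly from the closed form $w_i^\top h-\epsilon\|h\|$ and apply Cauchy--Schwarz, which is the same inequality read the other way.
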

\begin{proof}
If there exists $j\neq i$ such that $||w_j-w_i||\leq \epsilon$, 
following the adversarial optimization, we must have 
$$
w_j^\top h \geq  \min_{||\delta_i||\leq \epsilon} (w_i + \delta_i)^\top h > w_j^\top h.
$$
which forms a contradiction. 
\end{proof}
%
%
Note that maximizing the adversarial training objective function 
can be viewed as enforcing each $w_i$ to be $\epsilon$-recognized by 
its corresponding context vector $h$, and hence implicitly enforces diversity between the recognized words and the other words.   
We should remark that the context vector $h$ in Definition~\ref{def:word} does not have to exist in the training set, 
although it will more likely happen in the training set due to the training. 
 
In fact, we can draw a more explicit connection between pairwise distance and adversarial softmax function. 
\begin{thm} 
Following the definition in \eqref{equ:advsoft}, we have  
$$
\advsoft_\epsilon(i, \vv w, h) 
\leq \sigma \left ( \Phi(i, \vv w, ||h||) \right ), 
$$
where $\sigma(t) = 1/(1+\exp(-t))$ is the sigmoid function and 
$\Phi(i, \vv w, \alpha)$
is an ``energy function''
that measures the distance from $w_i$ to the other words $w_j$, $\forall j\neq i$: 
\begin{align*}
\Phi(i, \vv w, \alpha) 
& = -\log \sum_{j\neq i} \exp(-\alpha (||w_i - w_j|| - \epsilon)) \\
& \leq  \alpha \min_{j\neq i}(||w_i - w_j|| - \epsilon).  
\end{align*}
\label{thm:diversity}
\end{thm}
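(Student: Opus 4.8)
The plan is to reduce everything to the closed form for $\advsoft_\epsilon(i,\vv w,h)$ already established in Theorem~\ref{thm:advh}, Eqn.~\eqref{equ:advsoft}, and then apply Cauchy--Schwarz together with the monotonicity of the elementary functions involved. First I would divide the numerator and denominator of \eqref{equ:advsoft} by $\exp(w_i^\top h - \epsilon\|h\|)$ to obtain
$$
\advsoft_\epsilon(i,\vv w,h) = \frac{1}{1 + \sum_{j\neq i}\exp\big(w_j^\top h - w_i^\top h + \epsilon\|h\|\big)} = \sigma\Big(-\log\sum_{j\neq i}\exp\big(w_j^\top h - w_i^\top h + \epsilon\|h\|\big)\Big),
$$
using the identity $1/(1+e^{-t}) = \sigma(t)$ with $t = -\log\sum_{j\neq i}\exp(\cdots)$. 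This already isolates an ``energy''-type term inside a sigmoid; it remains to compare it with $\Phi$.

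Next I would lower-bound each exponent. By Cauchy--Schwarz, $w_j^\top h - w_i^\top h = -(w_i - w_j)^\top h \geq -\|w_i - w_j\|\,\|h\|$, hence $w_j^\top h - w_i^\top h + \epsilon\|h\| \geq -\|h\|\big(\|w_i - w_j\| - \epsilon\big)$. Summing the exponentials over $j\neq i$ preserves this bound, and applying the decreasing map $t\mapsto -\log t$ reverses it, giving $-\log\sum_{j\neq i}\exp(w_j^\top h - w_i^\top h + \epsilon\|h\|) \leq \Phi(i,\vv w,\|h\|)$. Since $\sigma$ is increasing, this yields $\advsoft_\epsilon(i,\vv w,h)\leq \sigma(\Phi(i,\vv w,\|h\|))$, the first claim. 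For the second inequality I would simply note that the sum of the positive terms $\exp(-\alpha(\|w_i-w_j\|-\epsilon))$ is at least its largest term, i.e. $\sum_{j\neq i}\exp(-\alpha(\|w_i-w_j\|-\epsilon)) \geq \exp\big(-\alpha\min_{j\neq i}(\|w_i-w_j\|-\epsilon)\big)$ whenever $\alpha = \|h\|\geq 0$, and take $-\log$ of both sides.

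There is no substantive obstacle here; the proof is essentially bookkeeping. The only point requiring care is tracking the direction of the inequalities through the composition of the decreasing map $t\mapsto-\log t$ with the increasing map $\sigma$, and observing that the Cauchy--Schwarz step is the single place where the geometry of the embeddings — namely the pairwise distances $\|w_i - w_j\|$ — enters, which is exactly what makes the bound a statement about diversity.
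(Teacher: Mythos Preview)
Your proposal is correct and follows essentially the same route as the paper: rewrite $\advsoft_\epsilon$ as $\sigma$ of a log-sum-exp (the paper names this $\Psi(i,\vv w,h)$), apply Cauchy--Schwarz $(w_j-w_i)^\top h \geq -\|w_j-w_i\|\,\|h\|$ termwise, and track the inequality through $-\log$ and the increasing $\sigma$. The only addition is that you spell out the second bound $\Phi(i,\vv w,\alpha)\leq \alpha\min_{j\neq i}(\|w_i-w_j\|-\epsilon)$ via the sum-$\geq$-max argument, which the paper states without proof.
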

\vspace{-3em}
\begin{proof}
We have 
\begin{align*}
\begin{split}
& {\advsoft}_\epsilon(i, \vv w,h) \\
&~~~~~~~~~~~~~~   = \frac{\displaystyle\exp(w_i^\top h-\epsilon||h||)}{\displaystyle
\exp(w_i^\top h-\epsilon||h||) + \sum_{j\neq i} \exp(w_j^\top h)}\\
&~~~~~~~~~~~~~~   =
\sigma\left (\Psi(i, \vv w, h)\right),
\end{split}
\end{align*}
where 
$$
\Psi(i, \vv w, h)
= -\log \sum_{j\neq i}
\exp((w_j-w_i)^\top h + \epsilon ||h||).
$$
Note that $(w_j-w_i)^\top h \geq -||w_j-w_i|| \cdot ||h||$, we have
\begin{align*}
\Psi(i, \vv w, h)
& = -\log \sum_{j\neq i}
\exp((w_j-w_i)^\top h + \epsilon ||h||) \\
& \leq 
 -\log \sum_{j\neq i}
\exp(-||w_j-w_i||\cdot||h|| + \epsilon ||h||) \\
& = \Phi(i,\vv w, ||h||). 
\end{align*}
\vspace{-1.0em}
\end{proof}
\vspace{-0.5em}

Therefore, maximizing 
$\advsoft_\epsilon(i, \vv w, h)$, as our algorithm advocates, 
also maximizes the energy function $\Phi(i, \vv w, ||h||)$ to enforce $\min_{j\neq i}(||w_i - w_j||)$ larger than $\epsilon$ by placing a higher penalty 
on cases in which this is violated. 

\begin{figure*}
\centering
\begin{tabular}{ccc}
\raisebox{3em}{\rotatebox{90}{ Density}}
\includegraphics[height=0.2\textwidth]{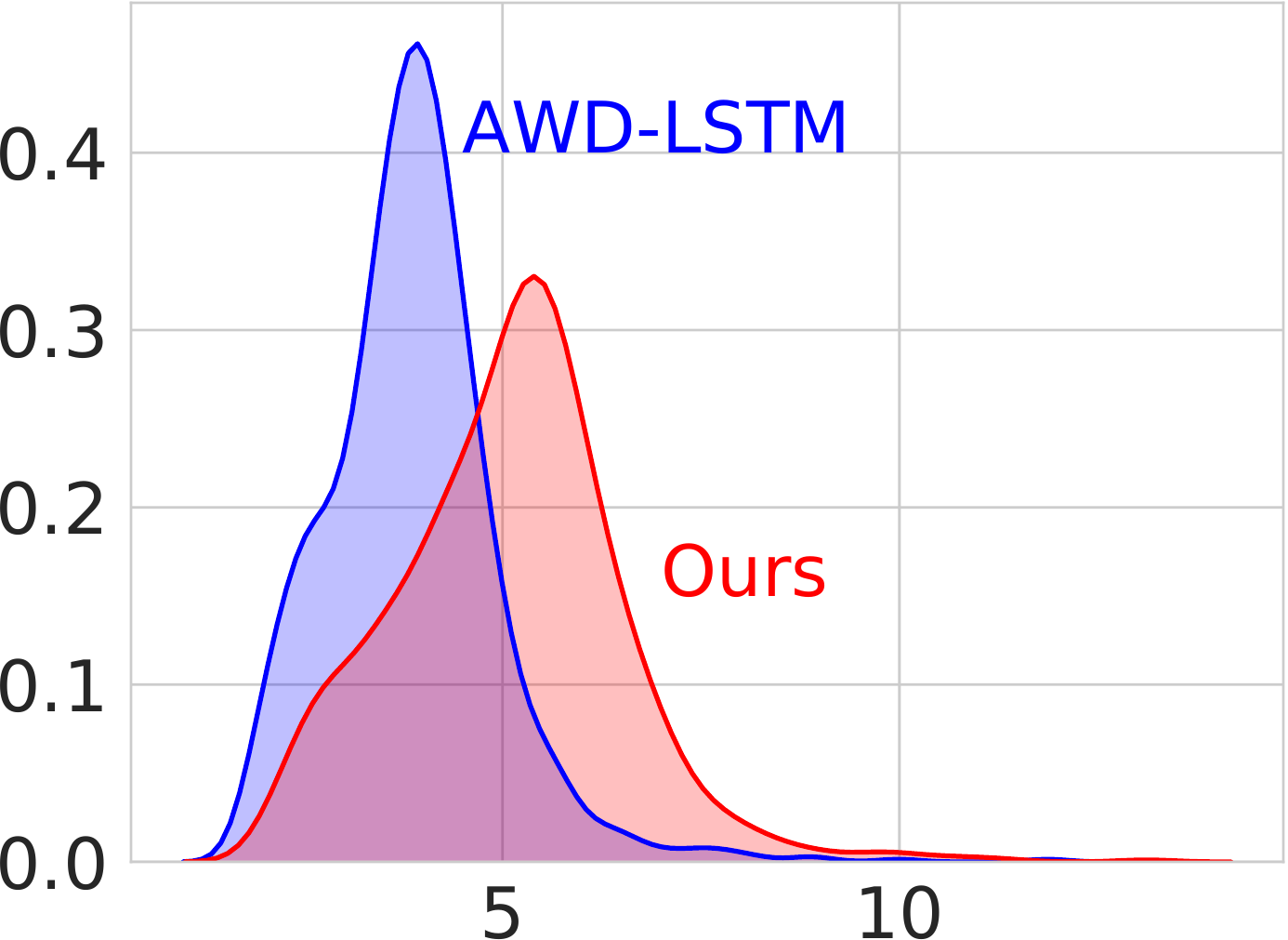} &
\raisebox{1.4em}{\rotatebox{90}{ Log Eigenvalues}}
\includegraphics[height=0.2\textwidth]{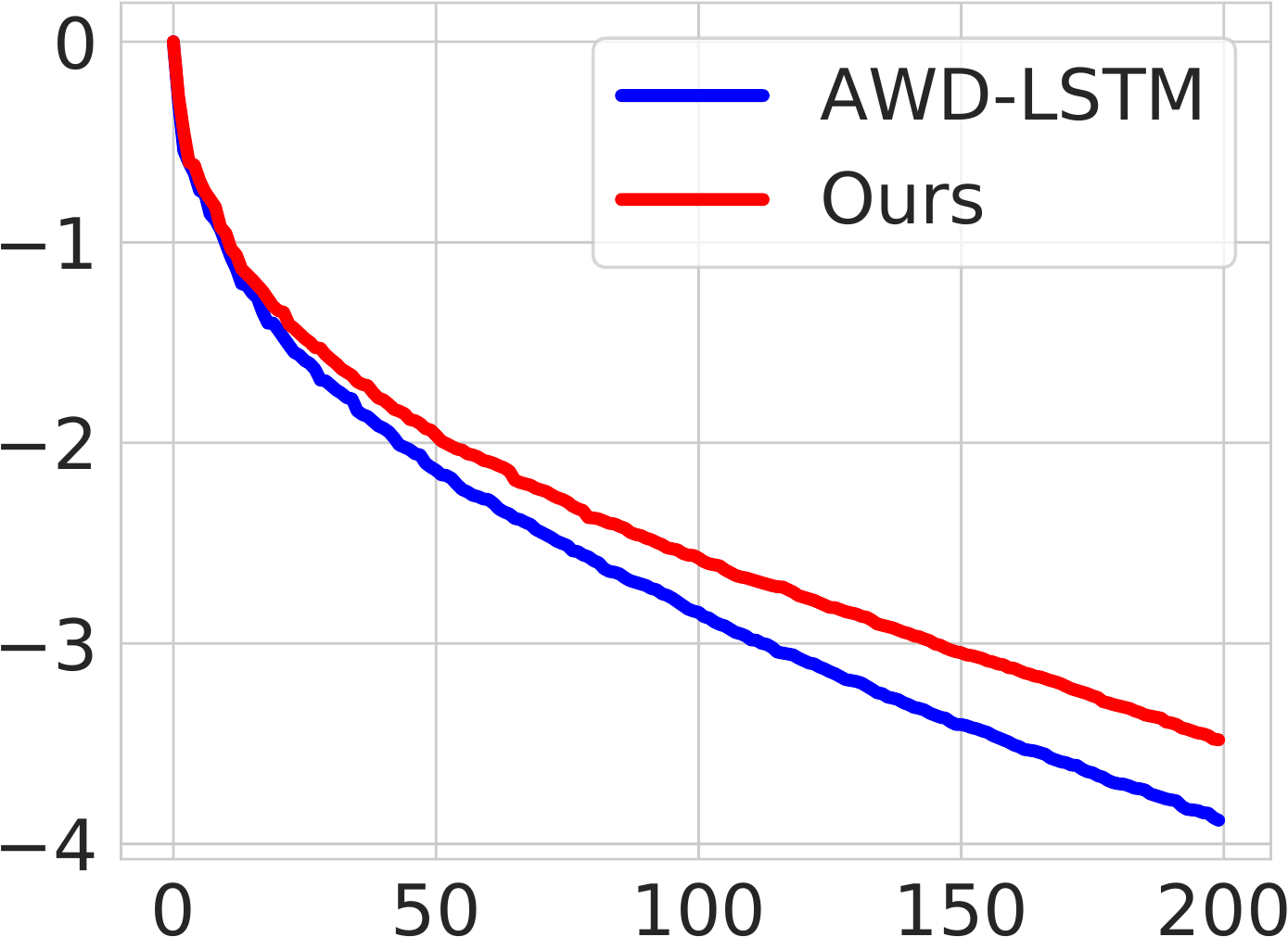} & 
\raisebox{3em}{\rotatebox{90}{ Perplexity}}
\includegraphics[height=0.2\textwidth]{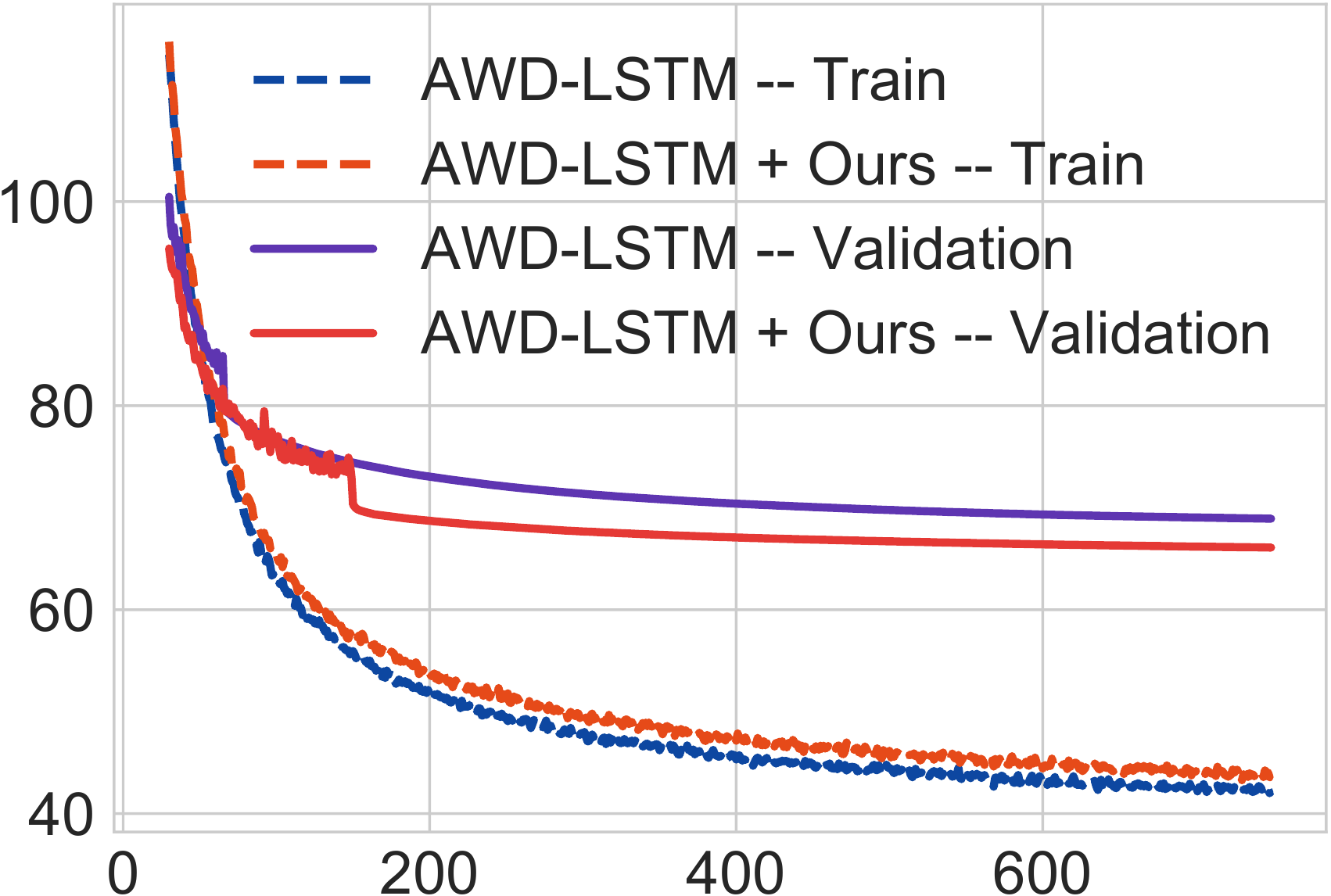}  
\\
(a) L2 Distance & (b) Index &  (c) Epochs (Wiki2)  \\
\end{tabular}
\caption{
(a) Kernel density estimation of the Euclidean distance to the nearest neighbor for each word;
(b) Logarithmic scale singular values of embedding
matrix. We normalize the singular values of each matrix so that the largest one is 1;
(c) Training and validation perplexities vs. training epochs
for AWD-LSTM \citep{merity2017regularizing} and our approach 
on the Wikitext-2(WT2) datasets. 
We follow the training settings reported in \citet{merity2017regularizing}.
The kink in the middle represents the start of fine-tuning.}
\label{fig:ptb_wt_ppl}
\end{figure*}

\section{Related Works and Discussions}
\label{sec:related}

\paragraph{Adversarial training} 
Adversarial machine learning has been an active 
research area recently \citep{szegedy2013intriguing, goodfellow2015explaining, athalye2018obfuscated},
in which algorithms are developed to 
either attack existing models by constructing adversarial examples, or 
 train robust models to defend adversarial attacks. 
More related to our work, 
\citep{sankaranarayanan2018regularizing} proposes a layer-wise adversarial training method to regularize deep neural networks. 
In statistics learning and robust statistics, 
various adversarial-like ideas are also leveraged to construct efficient and robust estimators, mostly for preventing model specification or data corruption \citep[e.g.,][]{maronna2018robust, duchi2016statistics}.     
Compared to these works, our work  
leverages the adversarial idea
as a regularization technique specifically for neural language models and focuses on introducing adversarial noise only on the softmax layers, so that a simple closed form solution can be obtained.
%

\paragraph{Direct Diversity Regularization}  
There has been a body of literature on
increasing the robustness by 
directly 
adding various forms of diversity-enforcing penalty functions \citep[e.g.,][]{elsayed2018large, xie2016diverse, liu2016large, liu2017sphereface, chen2017noisy, wang2018additive}. 
In the particular setting of enforcing diversity of word embeddings, 
\citet{gao2018representation} show 
that adding a cosine similarity regularizer improves language modeling performance, which has the form 
$
\sum_{i=1}^{|\vb|}\sum_{j\neq i}^{|\vb|} \frac{w_i^\top w_j }{ ||w_i||~||w_j||}.
$
However, in language modeling, 
one disadvantage of the direct diversity regularization approach  is that 
the vocabulary size $|\vb|$ can be huge, and
calculating the summation term exactly at each step is not feasible, while 
approximation with mini-batch samples may  make it ineffective. 
Our method promotes diversity implicitly with theoretical guarantees and does not introduce  computational overhead. 

\paragraph{Large-margin classification}
%
 In a general sense, our method can be seen as an instance of
 constructing
 large-margin classifiers by enforcing the distance of a word to its neighbors larger than a margin if it's recognized by any context.
 Learning large-margin classifiers has been extensively studied in the literature; 
 see e.g., \citet{weston1999support, tsochantaridis2005large, jiang2018predicting, elsayed2018large, liu2016large, liu2017sphereface}. 

\paragraph{Other Regularization Techniques for  
Language Models} 
Various other techniques have been also developed to address overfitting in RNN language models. 
For example, 
\citet{gal2016theoretically} propose to use variational inference-based dropout 
\citep{srivastava2014dropout} on recurrent neural networks, 
 in which the same dropout mask
 is repeated  at each time step for inputs, outputs, and recurrent layers  
for regularizing RNN models.  
\citet{merity2017regularizing} suggest to use DropConnect \citep{wan2013regularization} 
on the recurrent weight matrices and report a series of encouraging benchmark results.
Other types of regularization include 
activation regularization \citep{merity2017revisiting},
layer normalization \citep{ba2016layer}, 
and frequency agnostic training \citep{gong2018frage}, etc.
Our work is orthogonal to these regularization and optimization techniques
and can be easily combined with them to achieve further improvements, as we demonstrate in our experiments. 



\section{Empirical Results} 
\label{sec:experiement}
We demonstrate the effectiveness of our method
in two applications: neural language modeling and 
neural machine translation, and compare them with state-of-the-art architectures and learning methods.
All models are trained with the weight-tying trick \citep{press2016using, inan2016tying}. 
Our code is available at:~ \url{https://github.com/ChengyueGongR/advsoft}.

\begin{table*}[ht]
    \begin{center}
    \setlength{\tabcolsep}{1.2em}
        \begin{tabular}{l|c||cc}
            \toprule
            Method & \bf Params & Valid & Test \\
            \hline
            Variational LSTM~\cite{gal2016theoretically} & 19M & - & 73.4 \\
            Variational LSTM + weight tying~\cite{inan2016tying} & 51M & 71.1 & 68.5 \\
            NAS-RNN~\cite{zoph2016neural} & 54M & - & 62.4 \\
            DARTS~\cite{liu2018darts} & 23M & 58.3 & 56.1 \\
            \hline
            \multicolumn{4}{l}{w/o dynamic evaluation} \\ 
            \hline
            AWD-LSTM ~\citep{merity2017regularizing} & 24M & 60.00 & 57.30\\
            \bf{AWD-LSTM + Ours} & 24M & \bf{57.15} & \bf{55.01}\\
             AWD-LSTM + MoS \citep{yang2017breaking} & 22M & 56.54 & 54.44 \\
            \bf AWD-LSTM + MoS + Ours & 22M & \bf{54.98} & \bf{52.87} \\
            AWD-LSTM + MoS + Partial Shuffled \citep{press2019partially} & 22M & 55.89 & 53.92 \\
            \bf AWD-LSTM + MoS + Partial Shuffled + Ours & 22M & \bf{54.10} & \bf{52.20} \\
            \hline
            \multicolumn{4}{l}{+~~dynamic evaluation \citep{krause2017dynamic}} \\ 
            \hline
            AWD-LSTM 
            ~\citep{merity2017regularizing}   & 24M & 51.60 & 51.10\\
            \bf{AWD-LSTM +Ours}  & 24M & \bf{49.31} & \bf{48.72}\\
             AWD-LSTM + MoS 
            \citep{yang2017breaking}  & 22M & 48.33 & 47.69 \\
            \bf AWD-LSTM + MoS + Ours & 22M & \bf{47.15} & \bf{46.52} \\
            AWD-LSTM + MoS + Partial Shuffled \citep{press2019partially} & 22M & 47.93 & 47.49 \\
            \bf AWD-LSTM + MoS + Partial Shuffled + Ours & 22M & \bf{46.63} & \bf{46.01} \\
            \bottomrule
        \end{tabular}
    \end{center}
\caption{\label{PTB-table}
Perplexities on the validation and test sets on the Penn Treebank dataset.  Smaller perplexities refer to better language modeling performance. {\tt Params} denotes the number of model parameters. 
}
\end{table*}

\begin{table*}[ht]
    \begin{center}
    \setlength{\tabcolsep}{1.2em}
        \begin{tabular}{l|c||cc}
            \toprule
            Method & \bf Params & Valid & Test\\
            \hline
            Variational LSTM~\cite{inan2016tying} (h = 650)  &28M &92.3& 87.7\\
			Variational LSTM~\cite{inan2016tying} (h = 650) + weight tying &28M& 91.5& 87.0\\
			1-layer LSTM~\cite{mandt2017stochastic} &24M& 69.3 &65.9\\
			2-layer skip connection LSTM ~\cite{mandt2017stochastic} (tied) &24M &69.1& 65.9\\
			DARTS~\cite{liu2018darts} & 33M & 69.5 & 66.9 \\
			\hline
            \hline
            \multicolumn{4}{l}{w/o dynamic evaluation} \\
            \hline
            AWD-LSTM ~\citep{merity2017regularizing} & 33M & 68.60 & 65.80\\
           \bf{AWD-LSTM + Ours} & 33M & \bf{64.01} & \bf{61.56}\\
            AWD-LSTM + MoS \citep{yang2017breaking} & 35M & 63.88 & 61.45 \\
            \bf AWD-LSTM + MoS + Ours & 35M & \bf{61.93} & \bf{59.62} \\
            AWD-LSTM + MoS + Partial Shuffled \citep{press2019partially} & 35M & 62.38 & 59.98 \\
            \bf AWD-LSTM + MoS + Partial Shuffled + Ours & 35M & \bf{61.10} & \bf{58.95} \\
            \hline
            \multicolumn{4}{l}{+~~dynamic evaluation \citep{krause2017dynamic}} \\ 
            \hline
             AWD-LSTM ~\citep{merity2017regularizing}  & 33M & 46.40 & 44.30\\
            \bf{AWD-LSTM + Ours}  & 33M & \bf{42.48} & \bf{40.71}\\
            AWD-LSTM + MoS \citep{yang2017breaking}  & 35M & 42.41 & 40.68 \\
            \bf AWD-LSTM + MoS + Ours  & 35M & \bf{40.27} & \bf{38.65} \\
            AWD-LSTM + MoS + Partial Shuffled \citep{press2019partially} & 35M & 40.75 & 39.03 \\
            \bf AWD-LSTM + MoS + Partial Shuffled + Ours & 35M & \bf 39.58 & \bf 38.07 \\
            \bottomrule
        \end{tabular}
    \end{center}
\caption{\label{WT2-table} Perplexities on validation and test sets on the Wikitext-2 dataset.} 
\end{table*}

\begin{table*}[ht]
    \begin{center}
    \setlength{\tabcolsep}{1.5em}
        \begin{tabular}{l|cc}
            \toprule
            Method & Valid & Test\\
            \hline
            LSTM~\cite{grave2016efficient} & - & 48.7 \\
            Temporal CNN~\cite{bai2018convolutional} & - & 45.2 \\
            GCNN~\cite{dauphin2016language}  & - & 37.2\\
            LSTM + Hebbian~\citep{rae2018fast} & 34.1 &  34.3 \\
            \hline
             4 layer QRNN~\cite{merity2018analysis} & 32.0 & 33.0\\
             \bf{4 layer QRNN + Ours}  & \bf{30.6} & \bf{31.6}\\
             \hline \multicolumn{3}{l}{+~~post process \cite{rae2018fast}} \\
             \hline
             LSTM + Hebbian + Cache + MbPA~\cite{rae2018fast} & 29.0 & 29.2\\
             \bf{4 layer QRNN + Ours + dynamic evaluation}  & \bf{27.2} & \bf{28.0}\\
            \bottomrule
        \end{tabular}
    \end{center}
\caption{\label{WT103-table} Perplexities on validation and test sets on the  Wikitext-103 dataset. 
}
\end{table*}

\subsection{Experiments on Language Modeling}
We test our method on three benchmark datasets: 
Penn Treebank (PTB), Wikitext-2 (WT2) and Wikitext-103 (WT103). 


\paragraph{PTB
}
The PTB corpus~\citep{marcus1993building} 
has been a standard dataset used for benchmarking language models. It consists of 923k training, 73k validation and 82k test words.
We use the processed version provided by \citet{mikolov2010recurrent}
that is widely used for this dataset \citep[e.g.,][]{merity2017regularizing, yang2017breaking, kanai2018sigsoftmax, gong2018sentence}.

\paragraph{WT2 and WT103
} 
The WT2 and WT103 datasets are introduced in \citet{merity2016pointer} as an alternative to the PTB dataset, 
and which contain lightly pre-possessed Wikipedia articles. 
The WT2 and WT103 contain approximately 2 million and 103 million words, respectively. 

\paragraph{Experimental settings}
For the PTB and WT2 datasets, 
we closely follow the regularization and optimization techniques introduced in
AWD-LSTM~\citep{merity2017regularizing},
which stacks a three-layer LSTM and 
performs optimization with a bag of tricks.
%



The WT103 corpus contains around 103 million tokens, which is significantly larger than 
the PTB and WT2 datasets. 
In this case, we use Quasi-Recurrent neural networks (QRNN)-based language models \citep{merity2018analysis, bradbury2016quasi} as our base model for efficiency. 
QRNN allows for parallel computation across both time-step and minibatch dimensions,
enabling high throughput and good scaling for long sequences and large datasets. 

\citet{yang2017breaking} show that softmax-based language models yield low-rank
approximations and do not have enough capacity to model complex natural language.
They propose a mixture of softmax (MoS) to break the softmax bottleneck and achieve
significant improvements. 
We also evaluated our method within the MoS framework by 
directly following the experimental settings in \citet{yang2017breaking}, 
except we replace the original softmax function with our adversarial softmax function.

The training procedure of AWD-LSTM-based language models can be decoupled into two stages: 1) optimizing the model with SGD and averaged SGD (ASGD); 
2) restarting ASGD for fine-tuning.
We report the perplexity scores at the end of both stages.
We also report the perplexity scores with a recent proposed post-process method, dynamical evaluation \citep{krause2017dynamic}
after fine-tuning.


\paragraph{Applying Adversarial MLE training} 
To investigate the effectiveness of our approach, we simply replace the softmax layer of baseline methods with our adversarial softmax function, 
with all other the parameters and architectures untouched. 
We empirically found that adding small annealed Gaussian noise in the input embedding layer makes our noisy model converge more quickly.
We experimented with different ways of scaling the Gaussian dropout level and
found that a small Gaussian noise with zero mean and a small variance,
such that it decreases from 0.2 to 0.0 over the duration of the run, 
works well for all the tasks.

Note the optimal adversarial noise $\delta_i = -\epsilon h/||h||$ (see Algorithm~\ref{alg:main}) given RNN
prediction $h$ associated with a target word $w_i$.
Here,  $\epsilon$ controls the magnitude of of the noise level. 
When $\epsilon = 0$, our approach reduces to the original MLE training.  
We propose setting the noise level adaptive 
that proportional to the L2 norm of the target word embeddings, namely, by setting $\epsilon = \alpha ||w_i||$ with $\alpha$ as a hyperparameter. 

Figure~\ref{fig:adversarial_noise} shows the training and validation perplexities 
on the PTB dataset with different choices of $\alpha$. 
 We find that $\alpha$ in the range of $[0.001,0.01]$ perform similarly well.   
 Larger values (e.g., $\alpha=0.05$) causes more difficult optimization and hence underfitting, 
 while smaller values (e.g., $\alpha=0$ (the baseline approach))
 tends to overfit as we observe from standard MLE training. 
 We set $\alpha= 0.005$ for the rest of experiments  unless otherwise specified. 

\begin{figure}[ht]
    \centering
     \setlength\tabcolsep{2pt} 
    \begin{tabular}{cc}
    \raisebox{1em}{\rotatebox{90}{\small Training Perplexity}}
    \includegraphics[width=0.21\textwidth]{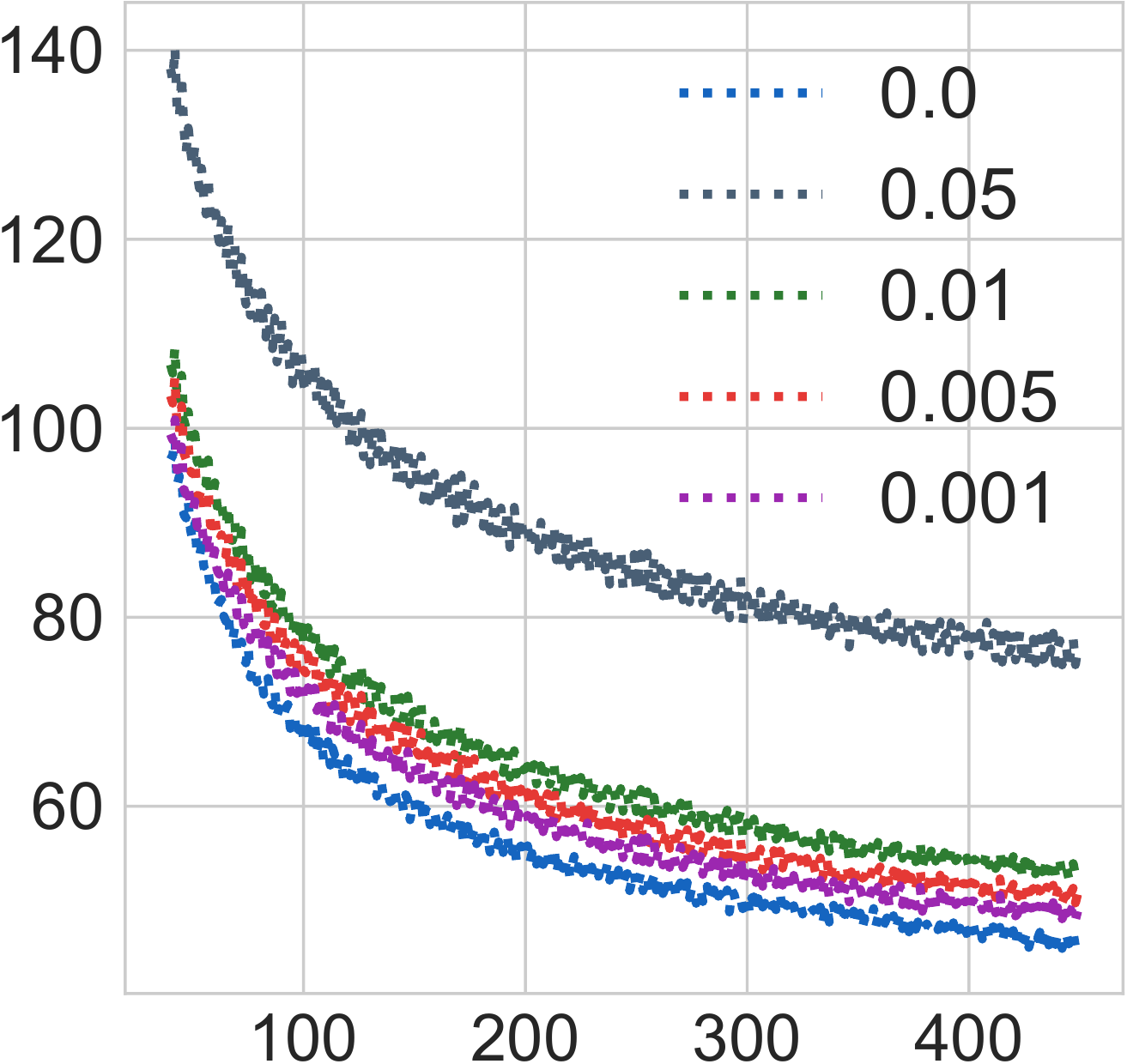} &
    \raisebox{1em}{\rotatebox{90}{\small Validation Perplexity}}
    \includegraphics[width=0.21\textwidth]{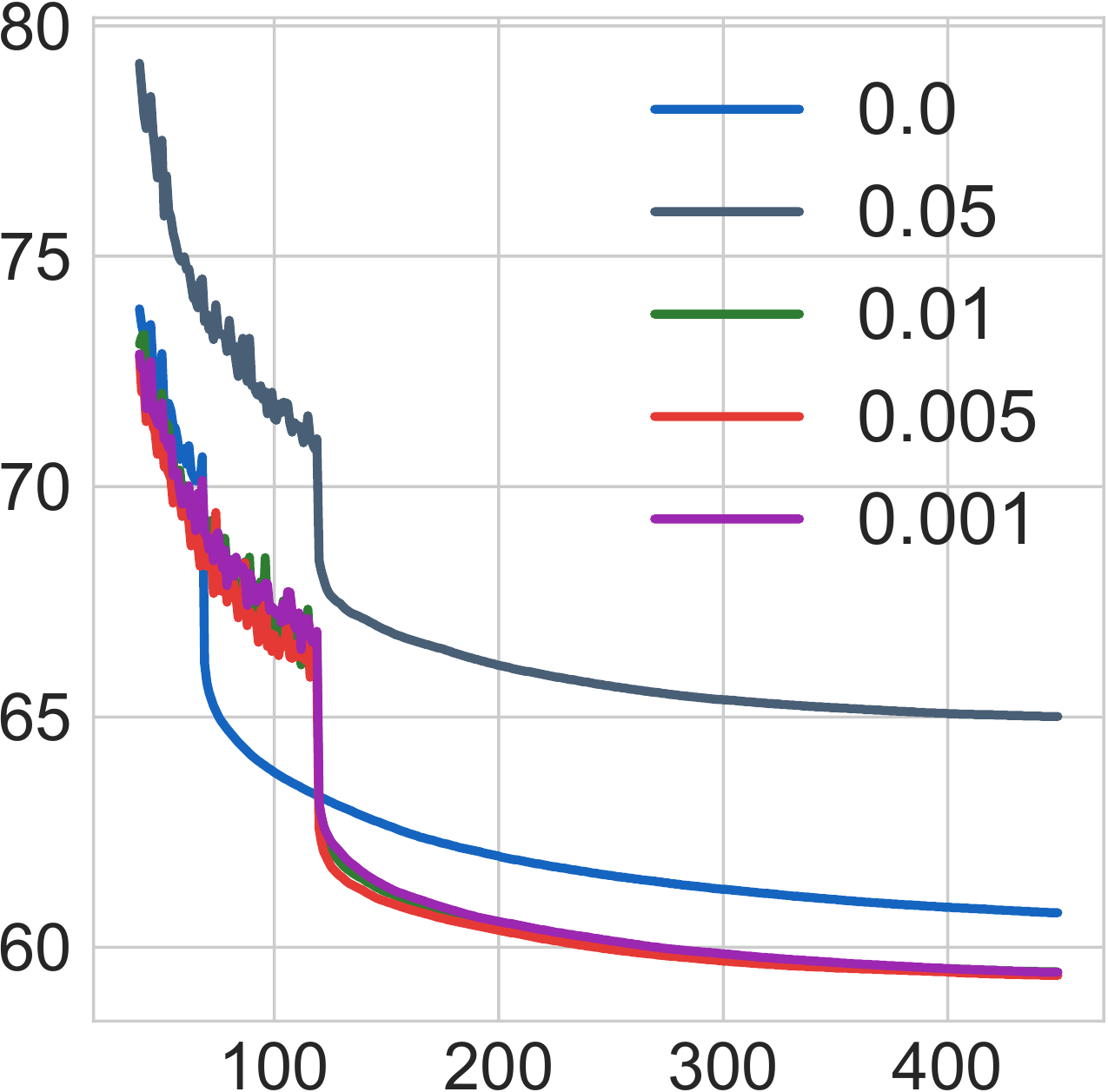}\\
     Epochs & Epochs \\
    \end{tabular}
    \caption{Training and validation perplexities on the PTB dataset with different choices of adversarial perturbation magnitude.}
    \label{fig:adversarial_noise}
\end{figure}


\paragraph{Results on PTB and WT2}
The results on the PTB and WT2 corpus are illustrated in Tables~\ref{PTB-table}
and \ref{WT2-table}, respectively. 
Methods with our adversarial softmax outperform
the baselines in all settings.  
Our results establish a new single model state-of-the-art
on PTB and WT2, achieving perplexity scores of 46.01 and 38.07, respectively.
Specifically, our approach significantly improves AWD-LSTM by a margin of  
2.29/2.38 and 3.92/3.59 in validation and test perplexity on the PTB and WT2 dataset.
We also improve the AWD-LSTM-MoS baseline 
by an amount of  1.18/1.17 and 2.14/2.03 in perplexity
for both datasets.


\paragraph{Results on WT103}
Table~\ref{WT103-table} shows that
on the large-scale WT103 dataset,
we improve the QRNN baseline with 1.4/1.4 points in perplexity on validation and test sets, respectively. 
With dynamic evaluation,
our method can achieve a test perplexity of 28.0,
which is,
to the authors' knowledge,
better than all existing CNN- or RNN-based models 
with similar numbers of model parameters.



\paragraph{Analysis}
We further analyze the properties of the learned word embeddings on the WT2 dataset.
Figure~\ref{fig:ptb_wt_ppl} (a) shows the distribution (via kernel density estimation) of the L2 distance between each word and its nearest neighbor learned by our method and the baseline, 
which verifies the diversity promoting property of our method. 
Figure~\ref{fig:ptb_wt_ppl} (b) shows the  
singular values of word embedding matrix learned by our model and that by the baseline model. 
We can see that, when trained with our method, the singular values distribute more uniformly,
an indication that our embedding vectors fills a higher dimensional subspace. 

 Figure~\ref{fig:ptb_wt_ppl} (c) 
 shows the training and validation 
 perplexities  of our method and baseline on AWD-LSTM. 
 We can see that our method is less prone to overfitting. While the baseline model reaches a smaller training error quickly, our method has a larger training error at the same stage because it optimizes a more difficult adversarial  objective, yet yields a significantly lower 
 validation error. 



\subsection{Experiments on Machine Translation}
We apply our method on machine translation tasks. Neural machine translation aims at building a single neural network
that maximize translation performance. 
Given a source sentence $s$, translation is equivalent to finding a target 
sentence $t$ by maximizing the conditional probability $p(t|s)$.
Here, we fit a parametrized model to maximize the conditional probability
using a parallel training corpus. Specifically, we use an RNN encoder-decoder framework \citep{cho2014learning, gehring2017convolutional, vaswani2017attention}, upon which we apply 
our adversarial MLE training that learns to translate. 


\paragraph{Datasets} 
We evaluate the proposed method on two translation tasks:
WMT2014 English $\to$ German (En$\to$De) and IWSLT2014 German $\to$ English (De$\to$En) translation. 
We use the parallel corpora publicly available at WMT 2014  and IWSLT 2014, 
which have been widely used for benchmark neural machine translation tasks~\cite{vaswani2017attention,gehring2017convolutional}. 
For fair comparison, we follow the standard data pre-processing procedures described in \citet{ranzato2015sequence, bahdanau2016actor}.

\noindent\emph{WMT2014 En$\to$De}
We use the original training set for model training, which consists of 
4.5 million sentence pairs. Source and target sentences are encoded by 
37K shared sub-word tokens based on byte-pair encoding (BPE) \citep{sennrich2015neural}.
We use the concatenation of newstest2012 and newstest2013 
as the validation set and test on newstest2014. 

\noindent\emph{IWSLT2014 De$\to$En}
This dataset contains 160K training sequences pairs and 7K validation sentence
pairs. Sentences are encoded using BPE with a shared vocabulary of about 33K tokens.
We use the concatenation of dev2010, tst2010, tst2011 and tst2011 as the test set, which is widely used in prior works~\citep{bahdanau2016actor}.

\paragraph{Experimental settings}
We choose the Transformer-based state-of-the-art machine translation model 
~\citep{vaswani2017attention} 
as our base model and use \textit{Tensor2Tensor}~\citep{tensor2tensor}~\footnote{https://github.com/tensorflow/tensor2tensor}
for implementation.
Specifically, 
to be consistent with prior works,
we closely follow the settings reported in \citet{vaswani2017attention}.
We use the Adam optimizer~\cite{kingma2014adam}
and follow the learning rate warm-up strategy in~\citet{vaswani2017attention}.
Sentences are pre-processed using byte-pair encoding~\citep{BPE} into subword tokens before training, 
and we measure the final performance with the BLEU score.

For the WMT2014 De$\to$En task,
we evaluate on the \textit{Transformer-Base} and \textit{Transformer-Big} architectures,
which consist of a 6-layer encoder and a 6-layer decoder with 512-dimensional and 1024-dimensional hidden units per layer, respectively. 
For the IWSLT2014 De$\to$En task, 
we evaluate on two standard configurations: \textit{Transformer-Small} and \textit{Transformer-Base}.
For \textit{Transformer-Small}, we stack a 4-layer encoder and a 4-layer decoder with 256-dimensional hidden units per layer. For \textit{Transformer-Base}, we set the batch size to 6400 and the dropout rate to 0.4 following \citet{wang2018multiagent}. For both tasks, we share the BPE subword vocabulary for decoder and encoder.

\paragraph{Results}
From Table~\ref{WMT-table} and Table~\ref{IWSLT-table}, 
we can see that our method
improves over the baseline algorithms for all settings.  
On the WMT2014 De$\to$En translation task, 
our method reaches 28.43 and 29.52 in BLEU score 
with the \textit{Transformer Base} and \textit{Transformer Big} architectures,  respectively; 
this yields an 1.13/1.12 improvement over their corresponding baseline models. 
On the IWSLT2014 De$\to$En dataset,
our method improves the BLEU score from 32.47 to  33.61 and 34.43 to 35.18 for the  \textit{Transformer-Small} and \textit{Transformer-Base} configurations, respectively.

\begin{table}[ht]
  	\begin{center}
  	\setlength\extrarowheight{1.5pt}
	\begin{tabular}{lc}
\toprule 
\bf  Method & \bf  BLEU \\
\hline
Local Attention~\cite{luong2015effective} &  20.90 \\
ByteNet~\cite{kalchbrenner2016neural} &  23.75 \\
ConvS2S~\cite{gehring2017convolutional} &  25.16 \\
\hline
Transformer Base ~\cite{vaswani2017attention}&  27.30 \\
\bf{Transformer Base + Ours} &  \bf {28.43} \\
\hline
Transformer Big ~\cite{vaswani2017attention}&  28.40 \\
Transformer Big + ~\cite{gao2018representation} &  28.94 \\
\bf{Transformer Big + Ours} &  \bf{29.52} \\
\bottomrule 
\end{tabular}
\end{center}
\caption{\label{WMT-table}  BLEU scores on the WMT2014 Ee$\to$De machine translation task. }
\end{table}

\begin{table}[ht]
  	\begin{center}
  	\setlength\extrarowheight{1.5pt}
	\begin{tabular}{lc}
\toprule 
\bf  Method & \bf  BLEU \\
\hline
Actor-critic~\cite{bahdanau2016actor}  &  28.53 \\
CNN-a~\cite{gehring2016convolutional}  &  30.04\\
\hline
Transformer Small~\cite{vaswani2017attention} &  32.47 \\
\bf{Transformer Small + Ours} &  \bf 33.61 \\
\hline
Transformer Base + \cite{wang2018multiagent} &  34.43 \\
\bf{Transformer Base + Ours} &  \bf 35.18 \\
\bottomrule
\end{tabular}
\end{center}
\caption{\label{IWSLT-table} BLEU scores on the IWSLT2014 De$\to$En machine translation task. }
\end{table}

\section{Conclusion}
In this work, we present an adversarial MLE training strategy for  neural language modeling,  which promotes diversity in the embedding space and improves the generalization performance.
Our approach can be easily used as a drop-in  replacement for standard MLE-based model
with no additional training parameters and  computational overhead. 
Applying this approach to a variety of language modeling and machine translation tasks, we achieve improvements over state-of-the-art baseline models on standard benchmarks. 

\section*{Acknowledgement}
This work is supported in part by NSF CRII 1830161 and NSF  CAREER 1846421.
We would like to acknowledge Google Cloud for their support.

\bibliographystyle{iclr2019_conference}
\bibliography{advsoft}

\end{document}